\newcommand{\texmath}[1]{\ensuremath{#1}\xspace}
\newcommand{\constraint}[1]{\mbox{\sc #1}\xspace}
\newcommand{\algo}[1]{\mbox{\texttt{#1}}}
\def\vc{\constraint{VertexCover}}
\def\nvalue{\constraint{NValue}}
\newcommand\thickbar[1]{\accentset{\rule{.5em}{.8pt}}{#1}}
\newcommand{\ub}[1]{\texmath{\thickbar{#1}}}
\DeclareRobustCommand{\lb}[1]{\texmath{\underaccent{\thickbar{}}{#1}}}
\def\instance{P}
\def\np{NP}
\def\witness{\omega}
\def\limit{\lambda}
\def\asetvar{S}
\def\agraph{G}
\def\vertices{V}
\def\somevertices{W}
\def\edges{E}
\def\acover{\asetvar}
\def\astable{I}
\def\anotherstable{J}
\def\aclique{C}
\def\amatching{M}
\def\anedge{e}
\newcommand{\edgeb}[2]{\texmath{\{{#1},{#2}\}}}
\def\avertex{v}
\def\overtex{u}
\def\avalue{v}
\def\aset{s}
\def\acrown{W}
\def\instance{x}
\def\costvar{K}
\def\param{k}
\def\z{z}
\def\universe{U}
\def\restricted{R}
\def\forced{F}
\def\indifferent{I}
\def\residue{H}
\def\vcover{S}
\def\ccover{T}
\def\avar{X}
\def\ovar{Y}
\def\acons{R}
\def\atuple{\tau}
\newcommand{\valueof}[2]{\texmath{{#1}[{#2}]}}
\newcommand{\truekernelized}[1]{\texmath{{#1}^{k}}}
\newcommand{\rigikernelized}[1]{\texmath{{#1}^{r}}}
\def\truekernel{\texmath{\truekernelized{\residue},\truekernelized{\acrown}}} %,\truekernelized{\astable}}}
\def\rigikernel{\texmath{\rigikernelized{\residue},\rigikernelized{\forced}}} %,\rigikernelized{\restricted}}}
\newcommand{\subgraph}[2]{{#1[{#2}]}}
\newcommand{\neighbors}[1]{\texmath{N({#1})}}
\newcommand{\neighborsp}[1]{\texmath{N^+({#1})}}
\newcommand{\dom}[1]{{\cal D}({#1})}
\newcommand{\post}[3]{{#1}[{#2}]({#3})}
\def\HiLi{\leavevmode\rlap{\hbox to \hsize{\color{yellow!30}\leaders\hrule height .8\baselineskip depth .5ex\hfill}}}
\def\HiLiO{\leavevmode\rlap{\hbox to \hsize{\color{orange!30}\leaders\hrule height .8\baselineskip depth .5ex\hfill}}}
\newcommand{\cut}[1]{}
\title{Propagation via Kernelization: the Vertex Cover Constraint}
\author{
Cl\'ement Carbonnel \and
Emmanuel Hebrard 
}
\institute{
LAAS-CNRS, Universit\'e de Toulouse, CNRS, INP, Toulouse, France \\
}
\begin{document}

\maketitle

\begin{abstract}
	The technique of kernelization consists in extracting, from an instance of a problem, an essentially equivalent instance whose size is bounded in a parameter $\param$. Besides being the basis for efficient parameterized algorithms, this method also provides a wealth of information to reason about in the context of constraint programming. We study the use of kernelization for designing propagators through the example of the Vertex Cover constraint. Since the classic kernelization rules often correspond to dominance rather than consistency, we introduce the notion of ``loss-less'' kernel. While our preliminary experimental results show the potential of the approach, they also show some of its limits. In particular, this method is more effective for vertex covers of large and sparse graphs, as they tend to have, relatively, smaller kernels. 
	%However, the overhead on unfavorable cases is limited.
\end{abstract}

\section{Introduction}
\label{sec:intro}

The fact that there is virtually no restriction on the algorithms used to reason about each constraint was critical to the success of constraint programming. For instance, efficient algorithms from matching and flow theory~\cite{networkflow:book,hopcroft73:alg} were adapted as \emph{propagation} algorithms~\cite{regin94:fil,hoeve06:glo} and subsequently lead to a number of successful applications. %, for 
\np-hard constraints, however, are often simply decomposed. 
Doing so may significantly hinder the reasoning made possible by the knowledge on the structure of the problem.
For instance, finding a support for the \nvalue\ constraint is \np-hard, yet enforcing some incomplete propagation rules for this constraint has been shown to be an effective approach~\cite{bessiere06:fil,fages13:fil}, compared to decomposing it,
or enforcing bound consistency~\cite{DBLP:conf/cp/Beldiceanu01}.

% Parameterized algorithms are 
The concept of parameterized complexity is very promising in the context of propagating \np-hard constraints. 
A study of the parameterized complexity of global constraints~\cite{bessiere08:par}, and of their pertinent parameters, showed that they were a fertile ground for this technique.
For instance, a kernelization of
%on the \emph{Kernelization} of
the \nvalue\ constraint was introduced in~\cite{DBLP:conf/ijcai/GaspersS11}, yielding an FPT consistency algorithm.
A kernel is an equivalent instance of a problem whose size is bounded in a parameter $\param$. If a problem has a polynomial-time computable kernel, then it is FPT since brute-force search on the kernel can be done in time $O^*(f(\param))$ for some computable function $f$. Moreover, kernelization techniques can provide useful information about suboptimal and/or compulsory choices, which can be used to propagate. In this paper we consider the example of the \emph{vertex cover} problem, where we want to find a set of at most $\param$ vertices $\acover$ of a graph $\agraph=(\vertices,\edges)$ such that every edge of $\agraph$ is incident to at least one vertex in $\acover$. This problem is a long-time favourite of the parameterized complexity community and a number of different kernelization rules have been proposed, along with very efficient FPT algorithms (the most recent being the $O(1.2738^k + k|\vertices|)$ algorithm by Chen, Kanj and Xia~\cite{conf/mfcs/ChenKX06}).
%This problem has received considerable attention and several kernelization where proposed, from the Buss rule yielding a kernel of size $\param^2$~\cite{}, to the so-called Crowns allowing a linear size kernel~\cite{} and as a result a $O(1.2738^k + k|\edges|)$ algorithm~\cite{conf/mfcs/ChenKX06}.
% A kernelization of
% %on the \emph{Kernelization} of
% the \nvalue\ constraint was introduced in~\cite{DBLP:conf/ijcai/GaspersS11}, yielding an efficient method to find a support. We show here how one can directly use kernelization techniques to filter a constraint.

Since the complement of a minimum vertex cover is a maximum independent set, a \vc constraint can also be used to model variants of the maximum independent set and maximum clique problems with side constraints modulo straightforward modeling tweaks. Among these three equivalent problems, vertex cover offers the greatest variety of pruning techniques and is therefore the most natural choice for the definition of a global constraint.
%
%A \vc constraint is useful, not only to model variants of the vertex cover problem with arbitrary side constraints, but also because, since the complement of a minimum vertex cover is a maximum independent set, it can be used to reason about maximum independent sets or maximum cliques modulo straightforward modeling tweaks.
%
Through this example, we highlight the ``triple'' value of kernelization in the context of constraint programming: 

First, some kernelization rules are, or can be generalized to, filtering rules. Since the strongest kernelization techniques rely on dominance they cannot be used directly for filtering. Therefore, we introduce the notion of \emph{loss-less} kernelization which preserves all solutions and can thus be used in the context of constraint propagation. Moreover, we show that we can use a more powerful form of kernel, the so-called \emph{rigid crowns} to effectively filter the constraint when the lower bound on the size of the vertex cover is tight. We discuss the various kernelization techniques for this problem in Section~\ref{sec:kern}.

Second, even when it cannot be used to filter the domain, a kernel can be sufficiently small to speed up lower bound computation, or to find a ``witness solution'' and sometimes an exact lower bound. We also show that such a support can be used to obtain stronger filtering.
%The kernel can also be used to guide search, either using the witness solution or the dominance relations on variable assignments. 
We introduce a propagation algorithm based on these observations in Section~\ref{sec:algo}.
Along this line, the kernel could also be used to guide search, either using the witness solution or the dominance relations on variable assignments.

Third, because a kernel garantees a size at most $f(\param)$ for a parameter $\param$, one can efficiently estimate the likelihood that these rules will indeed reduce the instance. We report experimental results on a variant of the vertex cover problem in Section~\ref{sec:expe}. These experiments show that, as expected, kernelization techniques perform better when the parameter is small. However, we observe that the overhead is manageable, even in unfavorable cases.
Moreover, one could dynamically choose whether costly methods should be applied by comparing the value of the parameter $\param$ (in our case, the upper bound of the variable standing for the size of the cover) to the input size.

%
%
%
% We discuss, in Section~\ref{sec:kern} the various kernelization techniques for this problem. As the strongest ones rely on dominance they cannot be used directly to filter the constraint. Therefore, we introduce the notion of \emph{loss-less} kernelization which preserves all solution and can thus be used in the context of constraint propagation. Moreover, we show that we can use a more powerful form of kernel, the so-called \emph{rigid crowns} to effectively filter the constraint when the lower bound on the size of the vertex cover is tight. Finally,
% %We also make the link with the \emph{full} kernels introduced by Damaschke~\cite{Damaschke2006337}.
%
%
% Next, in Section~\ref{sec:algo}, we introduce a propagation algorithm based on kernelization for the \vc constraint. We also discuss its implementation and in particular the incremental aspects. Moreover, we show that one can use a byproduct of this algorithm, namely a ``witness'' minimum vertex cover, to guide the search heuristic.
%
% Finally, we report in Section~\ref{sec:expe} the results of a set of experiments to evaluate the proposed approach. We explore a variety of ways to make use of kernelization techniques, and as expected, these techniques tend to perform better when the parameter is small. However, even in very unfavorable cases, the overhead is often manageable.

\section{Background and Notations}
\label{sec:back}

An \emph{undirected graph} is an ordered pair $\agraph = (\vertices,\edges)$ where $\vertices$ is a set of vertices and $\edges$ is a set of edges, that is, pairs in $\vertices$. We denote the \emph{neighborhood} $\neighbors{\avertex} = \{\overtex \mid \edgeb{\avertex}{\overtex} \in \edges\}$ of a vertex $\avertex$, its \emph{closed neighborhood} $\neighborsp{\avertex} = \neighbors{\avertex} \cup \{\avertex\}$ and $\neighbors{\somevertices} = \bigcup_{\avertex \in \somevertices}\neighbors{\avertex}$. The \emph{subgraph} of $\agraph = (\vertices,\edges)$ induced by a subset of vertices $\somevertices$ is denoted $\subgraph{\agraph}{\somevertices} = (\somevertices, 
2^\somevertices \cap \edges)$. 
%\{\anedge \mid \anedge \in \edges ~\&~ \anedge \subseteq \somevertices\})$.
%
An \emph{independent set} is a set $\astable \subseteq \vertices$ such that no pair of elements in $\astable$ is in $\edges$. 
%An \emph{independent set} is a set $\astable \subseteq \vertices$ such that no edge in $\edges$ is a subset of $\astable$.
%pair of elements in $\astable$ is in $\edges$. 
A \emph{clique} is a set $\aclique \subseteq \vertices$ such that every pair of elements in $\aclique$ is in $\edges$. 
%A \emph{matching} of a set $\agraph = (\vertices,\edges)$ is a set $\amatching \subseteq \edges$ such that $2|\amatching| = |\vertices|$ and $\bigcup_{\anedge \in \amatching}\anedge = \vertices$. 
A \emph{clique cover} $\ccover$ of a graph $\agraph  = (\vertices,\edges)$ is a collection of disjoint cliques such that $\bigcup_{\aclique \in \ccover}\aclique = \vertices$.
A \emph{matching} is a subset of pairwise disjoint edges.
%of a set $\agraph = (\vertices,\edges)$ is a set $\amatching \subseteq \edges$ such that $2|\amatching| = |\vertices|$ and $\bigcup_{\anedge \in \amatching}\anedge = \vertices$. 
A \emph{vertex cover} of $\agraph$ is a set $\acover \subseteq \vertices$ such that every edge $\anedge \in \edges$ is incident to at least one vertex in $\vcover$, i.e., $\acover \cap \anedge \neq \emptyset$. The \emph{minimum vertex cover problem} consists in finding a vertex cover of minimum size. Its decision version \cut{below} is \np-complete~\cite{Garey:1979:CIG:578533}.

\cut{
\begin{definition}[\vc\ problem]

\begin{tabular}{r p{9cm} }
  %{\sc Name:} & \vc \\
  {\sc Input:} & A graph $\agraph$ and integer $\param$ \\
  {\sc Problem:} & Does there exist a vertex cover of $\agraph$ of size at most $\param$?
	%set $\vcover \subseteq \vertices$ of vertices such that $|\vcover| \leq \param$ and every edge in $\edges$ is incident to at least one vertex in $\vcover$?
\end{tabular}

\end{definition}
}

% A global constraint $\cons$
%
% A Constraint Satisfaction Problem (CSP) is a triple $P = \left\langle \vars, D, C \right\rangle$
% where $X$ is a set of variables, $D$ is a function from variables to sets of
% values and $\dom{X_i}$ is the domain of $X_i$. An assignment $\sigma(S)$ to $S \subseteq X$
% is a function $S \rightarrow \cup_{i \in S} \dom{X_i}$ such that $\sigma(X_i) \in \dom{X_i}$.
% $C$ is a set of constraints. A constraint
% $c$ is a pair $\left\langle S_c, R_c \right\rangle$ where $S_c \subseteq X$
% and $P$ is a predicate over $\prod_{i \in S_c} \dom{X_i}$. If $R_c$ is made
% true by an assignment $\sigma(S_c)$, $c$ is satisfied by $\sigma$. The objective
% is to find $\sigma(X)$ that satisfies all constraints.
%
% Let $\mindom{X}$ ($\maxdom{X}$) be the minimum (max.) value in $\dom{X}$.
% A value $v \in \dom{X_i}$ is domain consistent (DC) in a constraint $c$
% if there is a $\sigma(S_c)$ that satisfies $c$ such that $\sigma(X_i) = v$ and
% $\sigma(X_j) \in \dom{X_j} \forall X_j \in S_c\setminus\{X_i\}$
% and range consistent (RC) if
% $\sigma(X_j) \in [\mindom{X_j}, \maxdom{X_j}] \forall X_j \in S_c\setminus\{X_i\}$.
% A constraint is DC (RC) if $\forall X_i \in S_c, v \in \dom{X_i}$,
% $v$ is DC (RC). It is bound consistent (BC) if $\mindom{X_i}$ and $\maxdom{X_i}$ are RC for all $X_i \in S_c$.
%

The standard algorithm for solving this problem is a simple branch and bound procedure.
There are several bounds that one can use, in this paper we consider the minimum clique cover of the graph (or, equivalently, a coloring of its complement). 
%A clique cover is a minimum partition of the vertices such that each partition is a clique.
Given a clique cover $\ccover$ of a graph $\agraph=(\vertices,\edges)$, we know that all but one vertices in each clique of $\ccover$ must be in any vertex cover of $\agraph$. Therefore, $|\vertices|-|\ccover|$ is a lower bound of the size of the minimum vertex cover of $\agraph$. 
%We denote $\lbcc{\agraph}$ the lower bound 
%
The algorithm branches by adding a vertex to the cover (left branch) or adding its neighborhood to the cover (right branch).

\medskip

A \emph{constraint} is a predicate over one or several variables.
%Given a tuple $\atuple$ and a variable $\avar$, we denote $\valueof{\atuple}{\avar}$ the value of $\avar$ in $\atuple$.
In this paper we consider the vertex cover problem as a constraint over two variables:
an \emph{integer} variable $\costvar$ to represent the bound on the size of the vertex cover,
and a \emph{set variable} $\asetvar$ to represent the cover itself.
The former takes integer values in a \emph{domain} $\dom{\costvar}$ which minimum and maximum values are denoted $\lb{\costvar}$ and $\ub{\costvar}$, respectively.
%its values in the integers larger than or equal to a lower bound $\lb{\costvar}$ and smaller than or equal to an upper bound $\ub{\costvar}$. 
The latter takes its values in the sets that are supersets of a \emph{lower bound} $\lb{\asetvar}$ and subsets of an \emph{upper bound} $\ub{\asetvar}$. Moreover, the domain of a set variable is also often constrained by its cardinality given by an integer variable $|\asetvar|$. 
\cut{The domain of a set variable is therefore the set of sets $\dom{\asetvar} = \{\aset \mid \lb{\asetvar} \subseteq \aset \subseteq \ub{\asetvar} ~\&~ \lb{|\asetvar|} \leq |\aset| \leq \ub{|\asetvar|}\}$.}
%
% whose value is integer
%  \emph{set variable}, that is, a variable whose value is a set. The domain of a set set variable $\asetvar$ is given by its \emph{lower bound} $\lb{\asetvar}$ which indicates the set of values that must be in $\asetvar$, its \emph{upper bound} $\ub{\asetvar}$ indicating the set of values that can be in $\asetvar$, and its \emph{cardinality} $|\asetvar|$.
%
% A constraint
%a set of Boolean variables $\asetvar = \{\bvar_1, \ldots, \bvar_n\}$, an integer variable $\costvar$ and a graph $\agraph=\left\langle \vertices, \edges \right\rangle$:
We consider a constraint on these two variables and whose predicate is the vertex cover problem on the graph $\agraph = (\vertices,\edges)$ given as a parameter:
\begin{definition}[\vc\ constraint]
	
  $\post{\vc}{\agraph}{\costvar,\asetvar} \iff |\asetvar| \leq \costvar ~\&~ \forall \edgeb{\avertex}{\overtex} \in \edges,~ \avertex \in \asetvar \vee \overtex \in \asetvar$
\end{definition}

\cut{
We say that a set $\aset$ is \emph{bound valid} for a set variable $\asetvar$ iff $\aset \in \dom{\asetvar}$, and an integer $\avalue$ is bound valid for an integer variable $\avar$ iff $\lb{\avar} \leq \avalue \leq \ub{\avar}$.
Given a constraint $\acons$ constraining a variable $\asetvar$, we say that the \emph{assignment} $\avertex \in \asetvar$ (resp. $\avertex \not\in \asetvar$) is \emph{bound consistent} iff there exists a \emph{bound support}, that is, a tuple $\atuple \in \acons$ such that $\avertex \in \valueof{\atuple}{\asetvar}$ (resp. $\avertex \not\in \valueof{\atuple}{\asetvar}$),
and for each $\avar$ constrained by $\acons$, $\valueof{\atuple}{\avar}$ is bound valid for $\avar$. %$|\lb{\asetvar}| \leq \ub{|\asetvar|}$ and $|\ub{\asetvar}| \geq \lb{|\asetvar|}$.
Similarly, for an integer variable $\ovar$ constrained by $\acons$, we say that the assignment $\ovar = \avalue$ is bound consistent iff there exists $\atuple \in \acons$ with $\valueof{\atuple}{\ovar} = \avalue$ and for every $\avar$, $\valueof{\atuple}{\avar}$ is bound valid for $\avar$.

A constraint $\acons$ is bound consistent iff, for every set variable $\asetvar$, the assignment $\avertex \in \asetvar$ is bounds consistent iff $\avertex \in \ub{\asetvar}$ and $\avertex \not\in \asetvar$ is bound consistent 
iff $\avertex \not\in \lb{\asetvar}$ and for every integer variable $\ovar$, the assignments $\ovar = \lb{\ovar}$ and $\ovar = \ub{\ovar}$ are bounds consistent.
}

A \emph{bound support} for this constraint is a solution of the \vc problem. Since enforcing \emph{bound consistency} would entail proving the existence of two bound supports for each element in $\ub{\asetvar} \setminus \lb{\asetvar}$ and one for the lower bound of $\costvar$, there is no polynomial algorithm unless P=NP. 
%Deciding if there exists a bound support for \vc is equivalent to deciding the vertex cover problem and therefore \np-complete. 
In this paper we consider pruning rules that are not complete with respect to usual notions of consistencies.

% Given a constraint $\acons$ constraining a variable $\asetvar$, we say that the \emph{assignment} $\avertex \in \asetvar$ (resp. $\avertex \not\in \asetvar$) is \emph{bound consistent} iff there exists a tuple $\atuple \in \acons$ such that $\avertex \in \valueof{\atuple}{\asetvar}$ (resp. $\avertex \not\in \valueof{\atuple}{\asetvar}$), $\lb{\asetvar} \subseteq \valueof{\atuple}{\asetvar} \subseteq \ub{\asetvar}$ and for each $\avar \neq \avar$ constrained by $\acons$, $\lb{\avar} \subseteq \valueof{\atuple}{\avar} \subseteq \ub{\avar}$.

%Similarly, for an integer variable $\avar$ constrained by $\acons$, we say that the assignment $\avar = \avalue$ is bound consistent iff there exists $\atuple \in \acons$ with $\valueof{\atuple}{\avar} = \avalue$ and for every $\ovar \neq \avar$, $\lb{\ovar}$

%
%
% A constraint is said \emph{domain consistent}
%
% The problem and the constraints are essentially equivalent, and really differ only in the question being asked: in the case
%
%
% we want to detect \emph{domain inconsistent} values, that is,
%
% As the complement of a minimum vertex cover is an \emph{independent set}, this constraint may be useful to

% Global constraints / AC
% Kernelizations of VC
% Full Kernels
% Rigid Crowns

% brute force algo, lower bound

\section{Kernelization as a Propagation Technique}
\label{sec:kern}

\subsection{Standard Kernelization}
\label{ssec:skern}

A problem is \textit{parameterized} if each instance $\instance$ is paired with a nonnegative integer $\param$, and a parameterized problem is \textit{fixed-parameter tractable} (FPT) if it can be solved in time $O(|\instance|^{O(1)}f(\param))$ for some function $f$. A \textit{kernelization algorithm} takes as input a parameterized instance $(\instance,\param)$ and creates in polynomial time a parameterized instance $(\instance',\param')$ of the same problem, called the \textit{kernel}, such that
\begin{center}
\begin{varwidth}{\textwidth}
\begin{enumerate}[(i)]
\item $(\instance',\param')$ is satisfiable if and only if $(\instance,\param)$ is satisfiable;
\item  $|\instance'| \leq g(\param)$ for some computable function $g$, and 
\item $\param' \leq h(\param)$ for some computable function $h$.
\end{enumerate}
\end{varwidth}
\end{center}

While this formal definition does not guarantee that the kernel is a subinstance of $(\instance,\param)$, in graph theory kernelization algorithms often operate by applying a succession of dominance rules to eliminate vertices or edges from the graph.
In the case of vertex cover, the simplest dominance rule is the \textit{Buss rule}: if a vertex $v$ has at least $\param+1$ neighbors, then $v$ belongs to every vertex cover of size at most $\param$; we can therefore remove $v$ from the graph and reduce $\param$ by one. Applying this rule until a fixed point yields an elementary kernel that contains at most $\param^2$ edges and $2\param^2$ non isolated vertices~\cite{buss1993nondeterminism}. A more refined kernelization algorithm works using structures called crowns. A \textit{crown} of a graph $\agraph=(\vertices,\edges)$ is a partition $(\residue,\acrown,\astable)$ of $\vertices$ such that 

\begin{center}
\begin{varwidth}{\textwidth}
\begin{enumerate}[(i)]
\item $\astable$ is an independent set;
\item There is no edge between $\astable$ and $\residue$, and
\item There is a matching $\amatching$ between $\acrown$ and $\astable$ of size $|\acrown|$.
\end{enumerate}
\end{varwidth}
\end{center}

Every vertex cover of $\subgraph{\agraph}{\acrown \cup \astable}$ has to be of size at least $|\acrown|$ because of the matching $\amatching$. Since $\astable$ is an independent set, taking the vertices of $\acrown$ over those of $\astable$ into the vertex cover is always a sound choice: they would cover all the edges between $\acrown$ and $\astable$ at minimum cost and as many edges in $\subgraph{\agraph}{\residue \cup \acrown}$ as possible. A simple polynomial-time algorithm that finds a crown greedily from a maximal matching already leaves an instance $\subgraph{\agraph}{\residue}$ with at most $3\param$ vertices~\cite{abu2007crown}.
%, which is a huge improvement over the Buss kernelization. 
A stronger method using linear programming yields a (presumably optimal) kernel of size $2\param$~\cite{nemhauser1975vertex}.

\subsection{Loss-less Kernelization}
\label{ssec:extbuss}

The strongest kernelization rules correspond to dominance relations rather than inconsistencies. However, the Buss rule actually detects inconsistencies, that is, vertices that must be in the cover.
We call this type of rules \emph{loss-less} as they do not remove solutions. We can extend this line of reasoning by considering rules that do not remove solutions close to the optimum: for the \vc constraint, the variable $\costvar$ is likely to be minimized and the situation where all solutions are close to the optimum will inevitably arise. This can be formalized in the context of \textit{subset minimization problems}, which ask for a subset $\asetvar$ with some property $\pi$ of a given universe $\universe$ such that $|\asetvar| \leq k$. In the next definition we denote by \texttt{opt} the cardinality of a minimum-size solution.

\begin{definition}
Given an integer $\z$ and a subset minimization problem parameterized by solution size $\param$, a $\z$-loss-less kernel is a partition $(\residue,\forced,\restricted,\indifferent)$ of the universe $U$ where 
\begin{itemize}
%\item $(\residue,\forced,\restricted,\indifferent)$ is a partition of the universe $U$;
\item $\forced$ is a set of \emph{forced} items, included in \textit{every} solution of size at most \texttt{opt}+$z$; 
\item $\restricted$ is a set of \emph{restricted} items, intersecting with no
%do not belong to any 
solution of size at most \texttt{opt}+$z$; 
\item $\residue$ is a residual problem, whose size is bounded by a function in $\param$ and 
%\item If $i \in \indifferent$ and $\phi$ is a solution of size at most $k-1$, then $\phi \cup i$ is a solution.
\item $\indifferent$ is a set of \emph{indifferent} elements, i.e., if $i \in \indifferent$, then $\phi$ is a solution of size at most $\param-1$ if and only if $\phi \cup i$ is a solution. 
\end{itemize}
\end{definition}

%\begin{definition}
%Given a subset minimization problem parameterized by the maximum solution size $k$, a loss-less kernel is a quadruple $(S,\forced,\restricted,\indifferent)$ where 
%\begin{itemize}
%\item $(S,\forced,\restricted,\indifferent)$ is a partition of the universe $U$;
%\item $\forced$ is a set that is included in \textit{every} solution; \item $\restricted$ is a set whose elements do not belong to any solution; \item $S$ is a set whose size is bounded by a function in $k$ and 
%\item For every element $i \in \indifferent$ and a solution $\phi$ of size at most $k-1$, $\phi \cup i$ is a solution.
%\end{itemize}
%\end{definition}

An $\infty$-loss-less kernel is simply said to be \textit{loss-less}. The Buss kernel is a loss-less kernel for vertex cover that never puts any vertices in $\restricted$ ($\forced$ contains vertices of degree strictly greater than $\param$, and $\indifferent$ contains isolated vertices). In the case of vertex cover, the set $R$ is always empty unless $z=0$. Note that \textit{loss-free} kernels introduced in the context of backdoors~\cite{samer2008backdoor} are different since they only preserve \textit{minimal} solutions; for subset minimization problems those kernels are called \textit{full kernels}~\cite{Damaschke2006337}.

A kernel for vertex cover that preserves all minimum-size solutions has been introduced in~\cite{Chlebík2008292}. In our terminology, this corresponds to a $0$-loss-less kernel. Interestingly, this kernelization is based on a special type of crown reduction but yields a kernel of size $2\param$ (matching the best known bound for standard kernelization). The idea is to consider only crowns $(\residue,\acrown,\astable)$ such that $\acrown$ is the \textit{only} minimum-size vertex cover of $\subgraph{\agraph}{\acrown \cup \astable}$, as for this kind of crown vertices of $\acrown$ are always a \textit{strictly} better choice that those of $\astable$. Those crowns are said \textit{rigid}. The authors present a polynomial-time algorithm that finds the (unique) rigid crown $(\residue,\acrown,\astable)$ such that $\residue$ is rigid crown free and has size at most $2\param$. %This procedure can be seen as a way to identify all rigid crowns at once.
Their algorithm 
works as follows. First,
%first
build from $\agraph=(\vertices,\edges)$ the graph $B_G$ with two vertices $v_l,v_r$ for every $v \in V$ and two edges $\edgeb{v_l}{u_r},\edgeb{u_l}{v_r}$ for every edge $\edgeb{v}{u} \in \edges$. Compute a maximum matching $\amatching$ of $B_G$ (which can be done in polynomial time via the Hopcroft-Karp algorithm~\cite{hopcroft73:alg}). Then, if $D$ is the set of all vertices in $B_G$ that are reachable from unmatched vertices via $\amatching$-alternating paths of even length, a vertex $v$ in $\agraph$ belongs to the independent set $\astable$ of the rigid crown if and only if $v_l$ and $v_r$ belong to $D$.
This algorithm is well suited to constraint propagation as bipartite matching algorithms based on augmenting paths are efficient and incremental.

%The Buss rule is relatively weak although it garantees a kernel of size $\param^2$. However, since loss-less rules are especially useful in the context of propagation, we propose a generalization of this rule.

%Given a vertex $\avertex$, we know that if it does not belong to the cover, then its neighborhood does, which implies that not selecting $\avertex$ will increase the size of the cover by $|\neighbors{\avertex}|$. However, we can also use any valid lower bound on the residual graph to improve this reasoning. For instance, %let $lb(\agraph)$ be the lower bound
%In plain words, 
%given a graph $\agraph = (\vertices,\edges)$
%, let $\agraph' = \subgraph{\agraph}{\vertices \setminus \neighborsp{\avertex}}$ be the residual graph 
%and $\avertex \in \vertices$, we know that the size of the minimum cover not containing $\avertex$ is 
%at least $|\neighbors{\avertex}|+|\vertices \setminus \neighborsp{\avertex}|-|\ccover|$ where $\ccover$ is a clique cover of the residual graph 
%$\subgraph{\agraph}{\vertices \setminus \neighborsp{\avertex}}$.

\subsection{Witness Pruning}
\label{ssec:witness}

Last, even if the standard kernel uses dominance relations, it can indirectly be used for pruning. By reducing the size of the problem it often makes it possible to find an optimal vertex cover
%, that is, a support for the constraint, 
relatively efficiently. This vertex cover gives a valid (and maximal) lower bound.
Moreover, given an optimal cover $\acover$ %and the complementary independent set $\astable = \vertices \setminus \acover$ 
we can find inconsistent values by asserting that some vertices must be in any cover of a given size.

\begin{theorem}
	\label{thm:wpruning}
	if $\acover$ is an optimal vertex cover of $\agraph = (\vertices, \edges)$ such that there exists $\avertex \in \acover, \anotherstable \subseteq \neighbors{\avertex} \setminus \acover$ with $\neighbors{\anotherstable} \subseteq \neighborsp{\avertex}$ then any vertex cover of $\agraph$ either contains $\avertex$ or at least $|\acover|+|\anotherstable|-1$ vertices.
\end{theorem}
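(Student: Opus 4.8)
The plan is to prove the stated dichotomy directly: I would fix an arbitrary vertex cover $\acover'$ of $\agraph$ with $\avertex \notin \acover'$ and show it must contain at least $|\acover| + |\anotherstable| - 1$ vertices; covers that already contain $\avertex$ require no argument. The engine is a local exchange argument that transforms $\acover'$ into another vertex cover whose size I can compare against the optimum $\acover$.

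Two preliminary observations set this up. First, $\anotherstable$ is an independent set: by hypothesis $\anotherstable \subseteq \neighbors{\avertex} \setminus \acover$, so $\anotherstable$ is disjoint from the cover $\acover$, and an edge inside $\anotherstable$ would be left uncovered by $\acover$, a contradiction. Second, since $\avertex \notin \acover'$ and $\acover'$ must cover every edge incident to $\avertex$, the whole neighbourhood $\neighbors{\avertex}$, and in particular $\anotherstable$, is contained in $\acover'$.

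The heart of the argument is to define $\acover'' = (\acover' \setminus \anotherstable) \cup \{\avertex\}$ and verify that it is again a vertex cover. The only edges put at risk by deleting $\anotherstable$ are those incident to some $j \in \anotherstable$ whose other endpoint leaves $\acover''$. Here the hypothesis $\neighbors{\anotherstable} \subseteq \neighborsp{\avertex}$ does the work: every neighbour of $j$ is either $\avertex$ itself, which is now covered because we added $\avertex$, or a vertex of $\neighbors{\avertex} \subseteq \acover'$; such a vertex cannot belong to $\anotherstable$ by independence, so it survives in $\acover' \setminus \anotherstable$ and still covers its edge. This verification, the single step that actually invokes $\neighbors{\anotherstable} \subseteq \neighborsp{\avertex}$, is where I expect the only real care to be needed, since one must be sure no edge incident to $\anotherstable$ slips through uncovered.

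Finally I count. Because $\anotherstable \subseteq \acover'$ and $\avertex \notin \acover'$, we have $|\acover''| = |\acover'| - |\anotherstable| + 1$. Optimality of $\acover$ forces $|\acover''| \geq |\acover|$, and rearranging yields $|\acover'| \geq |\acover| + |\anotherstable| - 1$, which is exactly the bound claimed for covers avoiding $\avertex$.
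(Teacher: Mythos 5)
Your proof is correct and is essentially the paper's own argument: the same exchange of $\anotherstable$ for $\avertex$ in a cover avoiding $\avertex$, followed by a comparison with the optimal cover $\acover$. The only difference is presentational (you argue directly while the paper phrases it as a contradiction), and your verification that the exchanged set is still a vertex cover is in fact more explicit than the paper's one-line justification.
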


\begin{proof}
Let $\param$ be an upper bound on the size of the vertex cover, $\avertex \in \acover$ be a vertex in an optimal vertex cover $\acover$. Consider $\anotherstable \subseteq \neighbors{\avertex} \setminus \acover$ such that 
%two distinct vertices $\overtex, \yetovertex \in \astable$ such that 
$\neighbors{\anotherstable} \subseteq \neighborsp{\avertex}$. Suppose there exists a vertex cover $\acover'$ such that $|\acover'| < |\acover|+|\anotherstable|-1$ and $\avertex \notin \acover'$. $\acover'$ must contain every node in $\neighbors{\avertex}$ and hence in $\anotherstable$. However, we can build a vertex cover of size at most $|\acover|-1$ by replacing $\anotherstable$ by $\avertex$, since $\vertices \setminus \acover$ and thus $\anotherstable$ are independent sets.
%Therefore, if $\acover$ is an optimal vertex cover of $\agraph$ such that there exists $\avertex \in \acover, \anotherstable \subseteq \astable$ with $\neighbors{\anotherstable} \subseteq \neighborsp{\avertex}$ then any vertex cover of $\agraph$ either contains $\avertex$ or at least $|\acover|+|\anotherstable|-1$ vertices.
\qed
\end{proof}

If we can manage to find a minimum vertex cover $\acover$, for instance when the kernel is small enough so that it can be explored exhaustively, Theorem~\ref{thm:wpruning} entails a pruning rule. 
If we find a vertex $\avertex \in \acover$ and a set $\anotherstable \in \neighbors{\avertex} \setminus \acover$ with $\neighbors{\anotherstable} \subseteq \neighborsp{\avertex}$ and $|\anotherstable| > \param - |\acover|$ then we know that $\avertex$ must be in all vertex covers of size $\leq \param$.

\section{A Propagation Algorithm for \vc}
\label{sec:algo}

In this section we give the skeleton of a propagation algorithm for the \vc constraint based on the techniques discussed in Section~\ref{sec:kern}.
\begin{algorithm}[htbp]
	\begin{small}
	\caption{\label{algo::vcpropag}\algo{PropagateVertexCover}($\asetvar, \costvar, \agraph=(\vertices,\edges), \limit, \witness$)}
	
	%\lnl{ln:basepruning}$\lb{\asetvar} \gets \lb{\asetvar} \cup \{y \mid x \in (\vertices \setminus \ub{\asetvar}) ~\&~ (x,y) \in \edges\}$\;
	\lnl{ln:basepruning}$\lb{\asetvar} \gets \lb{\asetvar} \cup \neighbors{\vertices \setminus \ub{\asetvar}}$\;

	%\lnl{ln:channel}$\agraph \gets \subgraph{\agraph}{\ub{\asetvar} \setminus \lb{\asetvar}}$\;

	\lnl{ln:nkernel}$\rigikernel \gets$ \algo{BussKernel}($\subgraph{\agraph}{\ub{\asetvar} \setminus \lb{\asetvar}}$)\;

	\lnl{ln:witnesscheck}\If{$\witness \not\subseteq \ub{\asetvar} ~\vee~ |\witness \cup \lb{\asetvar}| \geq \ub{\costvar}$} {
	
	\HiLiO\lnl{ln:rkernel}$\truekernel \gets$ \algo{Kernel}($\rigikernelized{\residue}$)\;
		
	\lnl{ln:bruteforce}\lIf{$\limit>0$}{$\witness \gets \rigikernelized{\forced} \cup \truekernelized{\acrown} \cup$ \algo{VertexCover}($\truekernelized{\residue}, \limit$)}
		
	\lnl{ln:opt}\lIf{$\witness$ is optimal} {
		$\lb{\costvar} \gets |\witness|$
	}
	\lnl{ln:approx}\lElse{
		$\lb{\costvar} \gets \max(\lb{\costvar}, |\rigikernelized{\forced}|+|\truekernelized{\forced}|+$\algo{LowerBound}$(\truekernelized{\residue}))$
	}		
	}

	\HiLiO\lnl{ln:tight}\If{$\lb{\costvar} = \ub{\costvar}$} {
	\HiLiO\lnl{ln:kernelpruning}$\rigikernel,\rigikernelized{\restricted} \gets$ \algo{RigidKernel}($\subgraph{\agraph}{\ub{\asetvar} \setminus \lb{\asetvar}}$)\;
	\HiLiO\lnl{ln:pruneub}$\ub{\asetvar} \gets \ub{\asetvar} \setminus \rigikernelized{\restricted}$\;
	}
	\HiLiO\lnl{ln:witnesspruning}\lElseIf{$\witness$ is optimal $\&~\ub{\costvar} - \lb{\costvar} \leq 2$} {
		$\lb{\asetvar},\ub{\asetvar} \gets$ \algo{WitnessPruning}($\agraph,  \witness$)
	}
	\lnl{ln:prunelb}$\lb{\asetvar} \gets \lb{\asetvar} \cup \rigikernelized{\forced}$\;
	\end{small}
\end{algorithm}

Algorithm~\ref{algo::vcpropag} takes as input the set variable $\asetvar$ standing for the vertex cover, an integer variable $\costvar$ standing for the cardinality of the vertex cover, and three parameters: the graph $\agraph=(\vertices,\edges)$, an integer $\limit$, and a ``witness'' vertex cover $\witness$ initialised to $\vertices$. %This witness is then used for three purposes when filtering: (a) providing a lower bound when optimal (Line~\ref{ln:opt}), (b) witnessing that the lower bound is not tight without computing a new lower bound (Line~\ref{ln:witnesscheck}), (c) providing an optimal vertex cover for the pruning rule corresponding to Theorem~\ref{thm:wpruning} (Line~\ref{ln:witnesspruning}).
% since the pruning from rigid crowns has this precondition. Moreover, we also use it to guide search.

%The structure of the algorithm is rather simple.
The pruning in Line~\ref{ln:basepruning} is a straightforward application of the definition: the neighborhood of vertices not in the cover must be in the cover. 
%In next line we simply channel the  Line~\ref{ln:channel}
Then, in Line~\ref{ln:nkernel}, we apply the $\infty$-loss-less kernelization (Buss rule) described in Section~\ref{ssec:extbuss} yielding a pair with a residual graph $\rigikernelized{\residue}$ and a set of nodes $\rigikernelized{\forced}$ that must be in the cover.

%, and  an independent a set $\rigikernelized{\restricted}$ of restricted vertices which will be empty.
%Then we check the cover witness to know if further pruning might be possible. 
Next, if Condition~\ref{ln:witnesscheck} fails, there exists a vertex cover ($\witness \cup \lb{\asetvar}$) of size strictly less than $\ub{\costvar}$. As a result, the pruning from rigid crowns cannot apply. 
%Notice that we could anyway compute a lower bound to prune the variable $\costvar$. This could be useful if that variable is not only used as an objective.
When the cover witness is not valid, we compute, in Line~\ref{ln:rkernel}, a standard kernel with the procedure \algo{Kernel}$(\agraph)$ using crowns, as explained in Section~\ref{ssec:skern}.
We then use this kernel to compute, in Line~\ref{ln:bruteforce},
%a witness and, if it is not optimal, a lower bound. %First, we compute the standard kernel with the procedure 
%\medskip
%Then, we compute 
a new witness using the procedure \algo{VertexCover}$(\agraph, \limit)$ which is the standard brute-force algorithm described in Section~\ref{sec:back}.
%augmented with the pseudo-kernelization based on cliques (cf. Section~\ref{ssec:skern}). 
We stop the procedure when we find a vertex cover whose size is stricly smaller than the current upper bound, or when the search limit of $\limit$, in number of nodes explored by the branch \& bound procedure, is reached. In the first case, we know 
%we are garanteed 
that the lower bound cannot be tight hence the constraint cannot fail nor prune further than the loss-less kernel. The second stopping condition is simply used to control the amount of time spent within the brute-force procedure.
%\medskip

If the call to the brute-force procedure was complete, we can conclude that the witness cover is optimal and therefore a valid lower bound (Line~\ref{ln:opt}). Otherwise, we simply use the lower bound computed at the root node by \algo{VertexCover}, denoted \algo{LowerBound} in Line~\ref{ln:approx}.
 If the lower bound is tight, then we can apply the pruning from rigid crowns as described in Section~\ref{ssec:extbuss}. Algorithm \algo{RigidKernel} returns a triple $\rigikernelized{\residue}, \rigikernelized{\forced}, \rigikernelized{\restricted}$ of residual, forced and restricted vertices, respectively.
Finally we apply a restriction to pairs of the pruning corresponding to Theorem~\ref{thm:wpruning} in Line~\ref{ln:witnesspruning}, and apply the pruning on the lower bound of $\asetvar$ corresponding to the forced nodes computed by \algo{BussKernel} and/or \algo{RigidKernel}.
%
% \medskip
%
%
%
% Finally, we can apply the pruning deduced on the graph structure to the set variable $\asetvar$. If the lower bound is tight, then we can apply the pruning from rigid crowns, i.e., $0$-loss-less kernel, as described in Section~\ref{ssec:extbuss}. Algorithm \algo{RigidKernel} returns a pair
%
%

%
% \subsection{Using the Kernel to Guide Search}
% \label{ssec:search}
%

\def\snap{\texttt{snap}\xspace}
\def\dimacs{\texttt{dimacs}\xspace}
\def\numpart{m}
\def\balance{b}

\def\clique{Cliques}
\def\decomp{Decomposition}
\def\onlylb{Clique Cover}
\def\kernelp{Kernel Pruning}
\def\witness{Kernel \& witness}
\def\all{\vc}
\def\allp{\vc$^+$}

\section{Experimental Evaluation}
\label{sec:expe}

% goal & intro
We experimentally evaluated 
%several variants of 
our propagation algorithm on the ``balanced vertex cover problem''. 
We want to find a minimum vertex cover which is balanced according to a partition of the vertices.
%of minimum size, however, we also want it to be balanced according to a partition of the vertices.
For instance, the vertex cover may represent a set of machines to shut down in a network so that all communications are interrupted. 
In this case, one might want to avoid shutting down too many machines of the same type, or same client, or in charge of the same service, etc.
%The degree of balance is modeled as an upper bound between the minimum and maximum number of vertices of each partition involved in the cover.
By varying the
degree of balance
%this upper bound, 
we can control the similarity of the problem to pure minimum vertex cover.
We used a range of graphs from the \dimacs and \snap repositories. For each graph $\agraph = (\vertices,\edges)$, we post a \vc constraint on the set variable
$\emptyset \subseteq \asetvar \subseteq \vertices$.% with empty lower bound and upper bound set to $\vertices$.
\begin{center}
\begin{table}[h!]
\caption{\label{tab:bvc} Comparison of approaches on the ``Balanced Vertex Cover'' problem.}
\tabcolsep=1.25pt
\begin{scriptsize}
\begin{tabular}{ll|rrrr|rrrr|rrrr|rrrr|rrrr}
\multicolumn{2}{c}{} & \multicolumn{4}{c}{\decomp} & \multicolumn{4}{c}{\onlylb} & \multicolumn{4}{c}{\kernelp} & \multicolumn{4}{c}{\witness} & \multicolumn{4}{c}{\all}\\
 &  & \#s & gap & cpu & \#nd & \#s & gap & cpu & \#nd & \#s & gap & cpu & \#nd & \#s & gap & cpu & \#nd & \#s & gap & cpu & \#nd\\
\hline
& \multicolumn{21}{c}{balancing constraint: tight}\\3 & \texttt{kel} & 2 & 2.00 & 9.7 & 0.4M & 2 & 2.00 & 10.6 & 0.2M & \cellcolor{TealBlue!30}{2} & \cellcolor{TealBlue!30}{2.00} & \cellcolor{TealBlue!30}{9.1} & \cellcolor{TealBlue!30}{0.2M} & 2 & 2.00 & 26.6 & 0.1M & 2 & 2.00 & 41.0 & 0.1M\\
15 & \texttt{p\_h} & 12 & 5.73 & 8.6 & 0.5M & \cellcolor{TealBlue!30}{10} & \cellcolor{TealBlue!30}{5.20} & \cellcolor{TealBlue!30}{15.6} & \cellcolor{TealBlue!30}{1.1M} & 11 & 5.20 & 11.2 & 0.6M & 11 & 4.67 & 27.7 & 0.4M & 11 & 4.67 & 28.8 & 0.4M\\
12 & \texttt{bro} & 9 & 3.67 & 0.1 & 11K & \cellcolor{TealBlue!30}{9} & \cellcolor{TealBlue!30}{3.67} & \cellcolor{TealBlue!30}{0.1} & \cellcolor{TealBlue!30}{4K} & \cellcolor{TealBlue!30}{9} & \cellcolor{TealBlue!30}{3.67} & \cellcolor{TealBlue!30}{0.1} & \cellcolor{TealBlue!30}{3K} & \cellcolor{TealBlue!30}{9} & \cellcolor{TealBlue!30}{3.67} & \cellcolor{TealBlue!30}{0.1} & \cellcolor{TealBlue!30}{2K} & \cellcolor{TealBlue!30}{9} & \cellcolor{TealBlue!30}{3.67} & \cellcolor{TealBlue!30}{0.2} & \cellcolor{TealBlue!30}{2K}\\
4 & \texttt{joh} & 1 & 0.00 & 0.1 & 10K & \cellcolor{TealBlue!30}{1} & \cellcolor{TealBlue!30}{0.00} & \cellcolor{TealBlue!30}{0.0} & \cellcolor{TealBlue!30}{1K} & \cellcolor{TealBlue!30}{1} & \cellcolor{TealBlue!30}{0.00} & \cellcolor{TealBlue!30}{0.0} & \cellcolor{TealBlue!30}{1K} & \cellcolor{TealBlue!30}{1} & \cellcolor{TealBlue!30}{0.00} & \cellcolor{TealBlue!30}{0.0} & \cellcolor{TealBlue!30}{971} & \cellcolor{TealBlue!30}{1} & \cellcolor{TealBlue!30}{0.00} & \cellcolor{TealBlue!30}{0.0} & \cellcolor{TealBlue!30}{937}\\
15 & \texttt{san} & 15 & 10.87 & 12.2 & 1.8M & 11 & 9.80 & 13.3 & 1.9M & 11 & 9.80 & 13.7 & 1.1M & \cellcolor{TealBlue!30}{11} & \cellcolor{TealBlue!30}{9.80} & \cellcolor{TealBlue!30}{10.8} & \cellcolor{TealBlue!30}{0.6M} & 11 & 9.80 & 12.4 & 0.6M\\
7 & \texttt{c-f} & 3 & 10.29 & 0.2 & 9K & 3 & 10.29 & 0.2 & 18K & \cellcolor{TealBlue!30}{3} & \cellcolor{TealBlue!30}{10.29} & \cellcolor{TealBlue!30}{0.1} & \cellcolor{TealBlue!30}{7K} & \cellcolor{TealBlue!30}{3} & \cellcolor{TealBlue!30}{10.29} & \cellcolor{TealBlue!30}{0.1} & \cellcolor{TealBlue!30}{7K} & \cellcolor{TealBlue!30}{3} & \cellcolor{TealBlue!30}{10.29} & \cellcolor{TealBlue!30}{0.1} & \cellcolor{TealBlue!30}{7K}\\
6 & \texttt{ham} & 4 & 9.00 & 26.3 & 2.2M & \cellcolor{TealBlue!30}{3} & \cellcolor{TealBlue!30}{9.00} & \cellcolor{TealBlue!30}{3.1} & \cellcolor{TealBlue!30}{0.3M} & \cellcolor{TealBlue!30}{3} & \cellcolor{TealBlue!30}{9.00} & \cellcolor{TealBlue!30}{3.4} & \cellcolor{TealBlue!30}{0.2M} & 3 & 9.00 & 5.1 & 0.2M & 3 & 9.00 & 5.1 & 0.2M\\
32 & \texttt{gra} & 29 & 40.47 & 24.6 & 2.5M & 28 & 40.47 & 19.8 & 3.5M & \cellcolor{TealBlue!30}{28} & \cellcolor{TealBlue!30}{39.22} & \cellcolor{TealBlue!30}{17.6} & \cellcolor{TealBlue!30}{2.0M} & 28 & 40.47 & 18.9 & 1.5M & 28 & 41.22 & 9.8 & 0.5M\\
4 & \texttt{man} & 3 & 91.00 & 1.1 & 33K & 3 & 91.00 & 1.1 & 51K & \cellcolor{TealBlue!30}{3} & \cellcolor{TealBlue!30}{91.00} & \cellcolor{TealBlue!30}{0.9} & \cellcolor{TealBlue!30}{31K} & \cellcolor{TealBlue!30}{3} & \cellcolor{TealBlue!30}{91.00} & \cellcolor{TealBlue!30}{1.4} & \cellcolor{TealBlue!30}{31K} & \cellcolor{TealBlue!30}{3} & \cellcolor{TealBlue!30}{91.00} & \cellcolor{TealBlue!30}{1.5} & \cellcolor{TealBlue!30}{30K}\\
5 & \texttt{mul} & 5 & 8.40 & 7.2 & 1.8M & 4 & 7.60 & 41.4 & 6.3M & 3 & 7.60 & 24.6 & 2.1M & 3 & 7.60 & 25.1 & 2.1M & \cellcolor{TealBlue!30}{3} & \cellcolor{TealBlue!30}{7.60} & \cellcolor{TealBlue!30}{19.2} & \cellcolor{TealBlue!30}{1.7M}\\
3 & \texttt{fps} & 3 & 105.00 & 0.1 & 7K & 3 & 103.67 & 40.5 & 4.2M & 3 & 103.67 & 56.0 & 3.2M & 3 & 103.67 & 61.0 & 3.2M & \cellcolor{TealBlue!30}{3} & \cellcolor{TealBlue!30}{103.67} & \cellcolor{TealBlue!30}{14.9} & \cellcolor{TealBlue!30}{0.8M}\\
3 & \texttt{zer} & 3 & 44.67 & 11.9 & 2.6M & 3 & 44.67 & 11.4 & 1.6M & 3 & 44.67 & 14.7 & 1.4M & 3 & 44.67 & 13.9 & 1.4M & \cellcolor{TealBlue!30}{3} & \cellcolor{TealBlue!30}{44.67} & \cellcolor{TealBlue!30}{8.2} & \cellcolor{TealBlue!30}{0.9M}\\
3 & \texttt{ini} & \cellcolor{TealBlue!30}{3} & \cellcolor{TealBlue!30}{191.33} & \cellcolor{TealBlue!30}{57.5} & \cellcolor{TealBlue!30}{6.1M} & 3 & 191.33 & 72.7 & 6.1M & 3 & 191.33 & 82.3 & 6.1M & 3 & 191.33 & 82.6 & 6.1M & 3 & 191.33 & 82.6 & 6.1M\\
\hline
5 & \texttt{p2p} & 5 & 38.60 & 1.0 & 8K & 5 & 22.80 & 36.1 & 23K & \cellcolor{TealBlue!30}{2} & \cellcolor{TealBlue!30}{11.80} & \cellcolor{TealBlue!30}{2.8} & \cellcolor{TealBlue!30}{11K} & \cellcolor{TealBlue!30}{2} & \cellcolor{TealBlue!30}{11.80} & \cellcolor{TealBlue!30}{3.1} & \cellcolor{TealBlue!30}{11K} & \cellcolor{TealBlue!30}{2} & \cellcolor{TealBlue!30}{11.80} & \cellcolor{TealBlue!30}{3.4} & \cellcolor{TealBlue!30}{11K}\\
5 & \texttt{ca-} & 5 & 14.40 & 31.6 & 0.2M & 4 & 9.00 & 35.6 & 0.2M & 4 & 1.80 & 99.3 & 0.2M & 3 & 2.60 & 102.3 & 0.2M & \cellcolor{TealBlue!30}{3} & \cellcolor{TealBlue!30}{1.80} & \cellcolor{TealBlue!30}{96.1} & \cellcolor{TealBlue!30}{0.2M}\\
& \multicolumn{21}{c}{balancing constraint: medium}\\3 & \texttt{kel} & 2 & 1.67 & 24.1 & 1.2M & 2 & 0.67 & 35.9 & 1.0M & 2 & 0.67 & 54.7 & 1.0M & 2 & 0.00 & 32.8 & 2K & \cellcolor{TealBlue!30}{2} & \cellcolor{TealBlue!30}{0.00} & \cellcolor{TealBlue!30}{32.1} & \cellcolor{TealBlue!30}{2K}\\
15 & \texttt{p\_h} & 12 & 3.07 & 21.5 & 1.2M & 10 & 1.27 & 24.3 & 0.7M & 11 & 1.27 & 34.4 & 0.6M & \cellcolor{TealBlue!30}{10} & \cellcolor{TealBlue!30}{0.87} & \cellcolor{TealBlue!30}{18.6} & \cellcolor{TealBlue!30}{60K} & \cellcolor{TealBlue!30}{10} & \cellcolor{TealBlue!30}{0.87} & \cellcolor{TealBlue!30}{18.8} & \cellcolor{TealBlue!30}{59K}\\
12 & \texttt{bro} & 9 & 0.83 & 15.6 & 1.9M & \cellcolor{TealBlue!30}{8} & \cellcolor{TealBlue!30}{0.17} & \cellcolor{TealBlue!30}{17.8} & \cellcolor{TealBlue!30}{1.0M} & 8 & 0.17 & 25.4 & 1.0M & 8 & 0.17 & 23.9 & 451 & 8 & 0.17 & 22.2 & 450\\
4 & \texttt{joh} & \cellcolor{TealBlue!30}{1} & \cellcolor{TealBlue!30}{0.00} & \cellcolor{TealBlue!30}{0.0} & \cellcolor{TealBlue!30}{11} & \cellcolor{TealBlue!30}{1} & \cellcolor{TealBlue!30}{0.00} & \cellcolor{TealBlue!30}{0.0} & \cellcolor{TealBlue!30}{11} & \cellcolor{TealBlue!30}{1} & \cellcolor{TealBlue!30}{0.00} & \cellcolor{TealBlue!30}{0.0} & \cellcolor{TealBlue!30}{11} & \cellcolor{TealBlue!30}{1} & \cellcolor{TealBlue!30}{0.00} & \cellcolor{TealBlue!30}{0.0} & \cellcolor{TealBlue!30}{11} & \cellcolor{TealBlue!30}{1} & \cellcolor{TealBlue!30}{0.00} & \cellcolor{TealBlue!30}{0.0} & \cellcolor{TealBlue!30}{11}\\
15 & \texttt{san} & 15 & 8.33 & 35.6 & 5.0M & 7 & 2.67 & 30.4 & 2.4M & 7 & 2.73 & 33.5 & 1.6M & \cellcolor{TealBlue!30}{7} & \cellcolor{TealBlue!30}{1.53} & \cellcolor{TealBlue!30}{42.8} & \cellcolor{TealBlue!30}{0.4M} & \cellcolor{TealBlue!30}{7} & \cellcolor{TealBlue!30}{1.53} & \cellcolor{TealBlue!30}{43.1} & \cellcolor{TealBlue!30}{0.4M}\\
7 & \texttt{c-f} & \cellcolor{TealBlue!30}{3} & \cellcolor{TealBlue!30}{4.14} & \cellcolor{TealBlue!30}{0.0} & \cellcolor{TealBlue!30}{40} & \cellcolor{TealBlue!30}{3} & \cellcolor{TealBlue!30}{4.14} & \cellcolor{TealBlue!30}{0.0} & \cellcolor{TealBlue!30}{40} & \cellcolor{TealBlue!30}{3} & \cellcolor{TealBlue!30}{4.14} & \cellcolor{TealBlue!30}{0.0} & \cellcolor{TealBlue!30}{40} & \cellcolor{TealBlue!30}{3} & \cellcolor{TealBlue!30}{4.14} & \cellcolor{TealBlue!30}{0.0} & \cellcolor{TealBlue!30}{40} & \cellcolor{TealBlue!30}{3} & \cellcolor{TealBlue!30}{4.14} & \cellcolor{TealBlue!30}{0.0} & \cellcolor{TealBlue!30}{40}\\
6 & \texttt{ham} & 4 & 4.67 & 0.2 & 53K & 2 & 4.67 & 0.0 & 1K & \cellcolor{TealBlue!30}{2} & \cellcolor{TealBlue!30}{4.67} & \cellcolor{TealBlue!30}{0.0} & \cellcolor{TealBlue!30}{360} & \cellcolor{TealBlue!30}{2} & \cellcolor{TealBlue!30}{4.67} & \cellcolor{TealBlue!30}{0.0} & \cellcolor{TealBlue!30}{359} & \cellcolor{TealBlue!30}{2} & \cellcolor{TealBlue!30}{4.67} & \cellcolor{TealBlue!30}{0.0} & \cellcolor{TealBlue!30}{359}\\
32 & \texttt{gra} & 26 & 29.28 & 32.8 & 2.7M & 22 & 26.50 & 21.9 & 2.2M & 22 & 24.44 & 23.8 & 1.4M & \cellcolor{TealBlue!30}{22} & \cellcolor{TealBlue!30}{24.25} & \cellcolor{TealBlue!30}{21.5} & \cellcolor{TealBlue!30}{0.9M} & 22 & 24.25 & 23.0 & 0.9M\\
4 & \texttt{man} & 3 & 89.00 & 29.3 & 1.3M & 3 & 88.75 & 44.6 & 1.6M & 3 & 88.75 & 21.1 & 0.6M & \cellcolor{TealBlue!30}{2} & \cellcolor{TealBlue!30}{88.50} & \cellcolor{TealBlue!30}{29.6} & \cellcolor{TealBlue!30}{0.6M} & 2 & 88.50 & 33.4 & 0.6M\\
5 & \texttt{mul} & 5 & 1.20 & 0.3 & 61K & 1 & 1.20 & 0.0 & 1K & 1 & 1.20 & 0.0 & 682 & 1 & 1.20 & 0.0 & 682 & \cellcolor{TealBlue!30}{1} & \cellcolor{TealBlue!30}{1.20} & \cellcolor{TealBlue!30}{0.0} & \cellcolor{TealBlue!30}{560}\\
3 & \texttt{fps} & 3 & 103.00 & 0.0 & 250 & 1 & 102.67 & 0.0 & 429 & 1 & 102.67 & 0.0 & 404 & 1 & 102.67 & 0.0 & 404 & \cellcolor{TealBlue!30}{1} & \cellcolor{TealBlue!30}{102.67} & \cellcolor{TealBlue!30}{0.0} & \cellcolor{TealBlue!30}{261}\\
3 & \texttt{zer} & 3 & 3.33 & 37.4 & 8.2M & \cellcolor{TealBlue!30}{1} & \cellcolor{TealBlue!30}{3.00} & \cellcolor{TealBlue!30}{11.6} & \cellcolor{TealBlue!30}{1.5M} & 1 & 3.00 & 25.4 & 1.5M & 1 & 3.00 & 14.5 & 1.5M & 1 & 3.00 & 14.5 & 1.5M\\
3 & \texttt{ini} & 3 & 189.00 & 0.6 & 65K & 1 & 189.00 & 0.0 & 4K & 1 & 189.00 & 0.0 & 3K & 1 & 189.00 & 0.0 & 3K & \cellcolor{TealBlue!30}{1} & \cellcolor{TealBlue!30}{189.00} & \cellcolor{TealBlue!30}{0.0} & \cellcolor{TealBlue!30}{3K}\\
\hline
5 & \texttt{p2p} & 5 & 35.40 & 1.0 & 8K & 5 & 15.80 & 38.3 & 26K & \cellcolor{TealBlue!30}{1} & \cellcolor{TealBlue!30}{4.40} & \cellcolor{TealBlue!30}{3.3} & \cellcolor{TealBlue!30}{11K} & \cellcolor{TealBlue!30}{1} & \cellcolor{TealBlue!30}{4.40} & \cellcolor{TealBlue!30}{3.5} & \cellcolor{TealBlue!30}{11K} & \cellcolor{TealBlue!30}{1} & \cellcolor{TealBlue!30}{4.40} & \cellcolor{TealBlue!30}{3.8} & \cellcolor{TealBlue!30}{11K}\\
5 & \texttt{ca-} & 5 & 14.40 & 0.7 & 5K & 4 & 8.60 & 2.3 & 8K & 3 & 0.40 & 72.6 & 18K & 2 & 1.20 & 74.1 & 16K & \cellcolor{TealBlue!30}{2} & \cellcolor{TealBlue!30}{0.40} & \cellcolor{TealBlue!30}{64.0} & \cellcolor{TealBlue!30}{15K}\\
& \multicolumn{21}{c}{balancing constraint: loose}\\3 & \texttt{kel} & 2 & 1.67 & 43.3 & 1.8M & 2 & 0.67 & 20.1 & 0.6M & 2 & 0.67 & 30.4 & 0.6M & \cellcolor{TealBlue!30}{2} & \cellcolor{TealBlue!30}{0.00} & \cellcolor{TealBlue!30}{27.7} & \cellcolor{TealBlue!30}{447} & \cellcolor{TealBlue!30}{2} & \cellcolor{TealBlue!30}{0.00} & \cellcolor{TealBlue!30}{28.0} & \cellcolor{TealBlue!30}{419}\\
15 & \texttt{p\_h} & 12 & 2.40 & 20.6 & 1.2M & 10 & 0.73 & 32.1 & 1.0M & 11 & 0.73 & 47.1 & 1.0M & \cellcolor{TealBlue!30}{9} & \cellcolor{TealBlue!30}{0.27} & \cellcolor{TealBlue!30}{18.0} & \cellcolor{TealBlue!30}{3K} & \cellcolor{TealBlue!30}{9} & \cellcolor{TealBlue!30}{0.27} & \cellcolor{TealBlue!30}{18.0} & \cellcolor{TealBlue!30}{3K}\\
12 & \texttt{bro} & 9 & 0.67 & 16.2 & 1.9M & \cellcolor{TealBlue!30}{8} & \cellcolor{TealBlue!30}{0.00} & \cellcolor{TealBlue!30}{10.6} & \cellcolor{TealBlue!30}{0.7M} & 8 & 0.00 & 15.8 & 0.7M & 8 & 0.00 & 13.6 & 264 & 8 & 0.00 & 13.6 & 264\\
4 & \texttt{joh} & \cellcolor{TealBlue!30}{1} & \cellcolor{TealBlue!30}{0.00} & \cellcolor{TealBlue!30}{0.0} & \cellcolor{TealBlue!30}{11} & \cellcolor{TealBlue!30}{1} & \cellcolor{TealBlue!30}{0.00} & \cellcolor{TealBlue!30}{0.0} & \cellcolor{TealBlue!30}{11} & \cellcolor{TealBlue!30}{1} & \cellcolor{TealBlue!30}{0.00} & \cellcolor{TealBlue!30}{0.0} & \cellcolor{TealBlue!30}{11} & \cellcolor{TealBlue!30}{1} & \cellcolor{TealBlue!30}{0.00} & \cellcolor{TealBlue!30}{0.0} & \cellcolor{TealBlue!30}{11} & \cellcolor{TealBlue!30}{1} & \cellcolor{TealBlue!30}{0.00} & \cellcolor{TealBlue!30}{0.0} & \cellcolor{TealBlue!30}{11}\\
15 & \texttt{san} & 15 & 8.20 & 28.1 & 4.0M & 7 & 2.13 & 40.9 & 2.3M & 7 & 2.27 & 29.6 & 1.4M & \cellcolor{TealBlue!30}{5} & \cellcolor{TealBlue!30}{0.27} & \cellcolor{TealBlue!30}{36.8} & \cellcolor{TealBlue!30}{3K} & \cellcolor{TealBlue!30}{5} & \cellcolor{TealBlue!30}{0.27} & \cellcolor{TealBlue!30}{36.8} & \cellcolor{TealBlue!30}{3K}\\
7 & \texttt{c-f} & 2 & 0.71 & 0.0 & 1K & 0 & 0.00 & 0.6 & 98K & \cellcolor{TealBlue!30}{0} & \cellcolor{TealBlue!30}{0.00} & \cellcolor{TealBlue!30}{0.1} & \cellcolor{TealBlue!30}{7K} & \cellcolor{TealBlue!30}{0} & \cellcolor{TealBlue!30}{0.00} & \cellcolor{TealBlue!30}{0.2} & \cellcolor{TealBlue!30}{7K} & \cellcolor{TealBlue!30}{0} & \cellcolor{TealBlue!30}{0.00} & \cellcolor{TealBlue!30}{0.2} & \cellcolor{TealBlue!30}{7K}\\
6 & \texttt{ham} & 4 & 2.00 & 0.0 & 118 & \cellcolor{TealBlue!30}{2} & \cellcolor{TealBlue!30}{2.00} & \cellcolor{TealBlue!30}{0.0} & \cellcolor{TealBlue!30}{118} & \cellcolor{TealBlue!30}{2} & \cellcolor{TealBlue!30}{2.00} & \cellcolor{TealBlue!30}{0.0} & \cellcolor{TealBlue!30}{118} & \cellcolor{TealBlue!30}{2} & \cellcolor{TealBlue!30}{2.00} & \cellcolor{TealBlue!30}{0.0} & \cellcolor{TealBlue!30}{118} & \cellcolor{TealBlue!30}{2} & \cellcolor{TealBlue!30}{2.00} & \cellcolor{TealBlue!30}{0.0} & \cellcolor{TealBlue!30}{118}\\
32 & \texttt{gra} & 23 & 18.97 & 29.4 & 1.9M & 17 & 12.84 & 12.9 & 0.8M & 17 & 12.06 & 15.6 & 0.5M & 17 & 11.56 & 14.5 & 66K & \cellcolor{TealBlue!30}{17} & \cellcolor{TealBlue!30}{11.50} & \cellcolor{TealBlue!30}{17.3} & \cellcolor{TealBlue!30}{0.1M}\\
4 & \texttt{man} & 3 & 88.50 & 38.1 & 0.8M & 3 & 88.50 & 8.5 & 0.3M & 3 & 87.75 & 30.6 & 0.8M & 2 & 87.00 & 43.5 & 0.8M & \cellcolor{TealBlue!30}{2} & \cellcolor{TealBlue!30}{86.50} & \cellcolor{TealBlue!30}{52.4} & \cellcolor{TealBlue!30}{0.8M}\\
5 & \texttt{mul} & 5 & 0.00 & 0.0 & 93 & \cellcolor{TealBlue!30}{0} & \cellcolor{TealBlue!30}{0.00} & \cellcolor{TealBlue!30}{0.0} & \cellcolor{TealBlue!30}{92} & \cellcolor{TealBlue!30}{0} & \cellcolor{TealBlue!30}{0.00} & \cellcolor{TealBlue!30}{0.0} & \cellcolor{TealBlue!30}{91} & \cellcolor{TealBlue!30}{0} & \cellcolor{TealBlue!30}{0.00} & \cellcolor{TealBlue!30}{0.0} & \cellcolor{TealBlue!30}{91} & \cellcolor{TealBlue!30}{0} & \cellcolor{TealBlue!30}{0.00} & \cellcolor{TealBlue!30}{0.0} & \cellcolor{TealBlue!30}{91}\\
3 & \texttt{fps} & 3 & 1.67 & 1.1 & 0.1M & 1 & 1.00 & 12.6 & 1.0M & 1 & 1.00 & 13.4 & 0.5M & 1 & 1.00 & 13.9 & 0.5M & \cellcolor{TealBlue!30}{1} & \cellcolor{TealBlue!30}{0.67} & \cellcolor{TealBlue!30}{53.5} & \cellcolor{TealBlue!30}{2.1M}\\
3 & \texttt{zer} & 3 & 2.00 & 0.2 & 48K & 0 & 1.00 & 2.6 & 0.4M & 0 & 1.00 & 2.7 & 0.2M & 0 & 1.00 & 2.6 & 0.2M & \cellcolor{TealBlue!30}{0} & \cellcolor{TealBlue!30}{1.00} & \cellcolor{TealBlue!30}{0.6} & \cellcolor{TealBlue!30}{58K}\\
3 & \texttt{ini} & 3 & 0.67 & 6.9 & 0.5M & 0 & 0.00 & 20.9 & 1.3M & 0 & 0.00 & 28.9 & 1.0M & 0 & 0.00 & 31.9 & 1.0M & \cellcolor{TealBlue!30}{0} & \cellcolor{TealBlue!30}{0.00} & \cellcolor{TealBlue!30}{11.6} & \cellcolor{TealBlue!30}{0.5M}\\
\hline
5 & \texttt{p2p} & 5 & 33.00 & 1.0 & 8K & 5 & 13.40 & 38.3 & 26K & \cellcolor{TealBlue!30}{1} & \cellcolor{TealBlue!30}{2.00} & \cellcolor{TealBlue!30}{3.4} & \cellcolor{TealBlue!30}{11K} & \cellcolor{TealBlue!30}{1} & \cellcolor{TealBlue!30}{2.00} & \cellcolor{TealBlue!30}{3.5} & \cellcolor{TealBlue!30}{11K} & \cellcolor{TealBlue!30}{1} & \cellcolor{TealBlue!30}{2.00} & \cellcolor{TealBlue!30}{3.9} & \cellcolor{TealBlue!30}{11K}\\
5 & \texttt{ca-} & 5 & 14.40 & 0.7 & 5K & 4 & 8.60 & 2.5 & 8K & 3 & 0.40 & 74.0 & 18K & 2 & 1.20 & 73.9 & 16K & \cellcolor{TealBlue!30}{2} & \cellcolor{TealBlue!30}{0.20} & \cellcolor{TealBlue!30}{69.2} & \cellcolor{TealBlue!30}{16K}\\
\end{tabular}
\end{scriptsize}

\end{table}
\end{center}
Then, we compute (uniformly at random) a balanced 4-partition $\{\aset_1,\aset_2,\aset_3,\aset_4\}$ of 
the vertices and we post the following constraints:
%$\numpart$ sets of equal sizes. 
%For a set $\aset_i$, we count the cardinality $|\aset_i \cap \asetvar|$ of the subset of vertices shared with the vertex cover $\asetvar$. 
%We post simple contraints to ensure that the balance is bounded by a value $\balance$:
$
\max(\{|\aset_i \cap \asetvar| \mid 1 \leq i \leq 4 \}) - \min(\{|\aset_i \cap \asetvar| \mid 1 \leq i \leq 4 \}) \leq \balance
$.
%
%    * graph files
%
%  - approaches
%
For each graph instance, we generated 3 instances for $\balance \in \{0,4,8\}$ denoted ``tight'', ``medium'' and ``loose'' respectively. However, the classes \texttt{p2p} and \texttt{ca-} are much too large for these values to make sense. In this case we used three ratios $0.007, 0.008$ and $0.009$ of the number of nodes instead.
%The number of partitions is $\numpart=4$ in all cases. 
%

We compared 5 methods, all implemented in Mistral~\cite{mistral} and ran on CORE I7 processors with a time limit of 5 minutes:
% protocol

%  - benchmarks

%    * problem generation

%\medskip
%\begin{itemize}
	{\textbf{\decomp}} is a simple decomposition in 2-clauses and a cardinality constraint.
	%
	% \medskip
	{\textbf{\onlylb}} uses only Buss kernelization and the clique cover lower bound.
		It corresponds to non-colored lines in Algorithm~\ref{algo::vcpropag}. The witness is initialised to $\vertices$ and never changes, 
		and Line~\ref{ln:rkernel} is replaced by a simple identity $\truekernelized{\residue} \gets \rigikernelized{\residue}$.
		%and the procedure \algo{Kernel} in Line~\ref{ln:rkernel} is replaced by a simple identity.
	 %(i.e., without Lines~\ref{ln:rkernel}, \ref{ln:bruteforce}, \ref{ln:opt}, \ref{ln:kernelpruning} and \ref{ln:witnesspruning}),
	 %
	 % \medskip
	{\textbf{\kernelp}} uses kernelization, but no witness cover. It corresponds to 
	Algorithm~\ref{algo::vcpropag} minus the instruction line~$11$, 
	%non-colored and yellow-colored lines 
	%in Algorithm~\ref{algo::vcpropag} 
	with $\limit$ set to 0.
	%The witness is initialised to $\vertices$ and never changes and $\limit$ is set to 0.
	%
	% \medskip
	{\textbf{\witness}} uses kernelization, and the witness cover for the lower bound $\lb{\costvar}$.
	It corresponds to
	Algorithm~\ref{algo::vcpropag} minus the instruction line~$11$,	
	 %non-colored and yellow-colored lines 
	 with $\limit$ set to 5000.
	% is Algorithm~\ref{algo::vcpropag} without the brute-force witness (i.e., without Lines~\ref{ln:bruteforce}, \ref{ln:opt} and \ref{ln:witnesspruning}),
	%\item \textbf{\witness} is Algorithm~\ref{algo::vcpropag} without the witness-based pruning (i.e., without Lines~\ref{ln:witnesspruning})
	%
	% \medskip
	{\textbf{\all}} is Algorithm~\ref{algo::vcpropag} with $\limit$ set to 5000.

%  - hardware and software

%All experiments were run on machine with 8 CORE I7 processors. All algorithms and models were implemented in Mistral~\cite{mistral}. %\footnote{\url{https://github.com/ehebrard/Mistral-2.0}}.  

% result

% analysis

The results of these experiments are reported in Table~\ref{tab:bvc}. 
Instances are clustered by classes whose cardinality is given in the first column. 
These classes are ordered from top to bottom by decreasing ratio of minimum vertex cover size over number of nodes. 
We report four values for each class and each method: `\#s' is the number of instances of the class that were not solved to optimality, `gap' is the average gap w.r.t. the smallest vertex cover found, %(without side constraints),
%and the one found by the approach while satisfying the extra balancing constraint, 
`cpu' and `\#nd' are mean CPU time in seconds and number of nodes visited, respectively, until finding the best solution. Notice that CPU times and number of nodes are then only comparable when the objective values (gaps) are equal.
We color the tuples $\langle$\#s, gap, cpu, \#nd$\rangle$ that are lexicographically minimum for each class\footnote{With a ``tolerance'' of 1s and 1\% nodes.}.

Instances with same value of $\balance$ are grouped in the same sub-table.
The ``shift'' of colored cells from left to right when going from top to bottom in each subtable was to be expected since the kernelization is more effective on instances with small vertex cover.
It should be noted that many instances from the \dimacs repository are extremely adverse to our method as they tend to have very large vertex covers. On the other hand, kernelization is very effective on large graphs from \snap.  

We can also observe another shift of colored cells from left to right when moving to a subtable to the next. This was also an expected outcome since the pruning on this constraint becomes more prevalent when the problem is closer to pure vertex cover.

Last, we can observe that every reasoning step ($0$-loss-less kernels, lower bound from the witness and pruning from the witness) improves the overall results.

\section{Conclusion}
\label{sec:conclu}

We have shown that the kernelization techniques can be an effective way to reason about NP-hard constraints that are fixed parameter tractable.
In order to design a propagation algorithm we introduced the notion of loss-less kernel and outlined several ways to benefit from a small kernel.
Our experimental evaluation on the \vc constraint shows the promise of this approach.

\clearpage

\bibliographystyle{plain}
\bibliography{bibfile}

\end{document}